\DeclareFontFamily{U}{mathx}{\hyphenchar\font45}
\DeclareFontShape{U}{mathx}{m}{n}{
      <5> <6> <7> <8> <9> <10>
      <10.95> <12> <14.4> <17.28> <20.74> <24.88>
      mathx10
      }{}
\DeclareSymbolFont{mathx}{U}{mathx}{m}{n}
\DeclareMathAccent{\widecheck}{0}{mathx}{"71}
\newtheorem{theorem}{Theorem}
\newtheorem{lemma}{Lemma}
\newcommand{\argmax}{\mathop{\rm argmax}}
\newcommand{\argmin}{\mathop{\rm argmin}}
\newcommand{\rmax}{R_{\max}}
\newcommand{\eps}{\epsilon}
\newcommand{\mc}{\mathcal}
\newcommand{\norm}[1]{\|#1\|_{\infty}}
\renewcommand{\check}{\widecheck}
\renewcommand{\hat}{\widehat}
\icmltitlerunning{Mitigating Planner Overfitting in Model-Based Reinforcement Learning}
\icmltitlerunning{Mitigating Planner Overfitting in Model-Based Reinforcement Learning}
\begin{document} 
% \maketitle
\twocolumn[
\icmltitle{Mitigating Planner Overfitting in Model-Based Reinforcement Learning}

% \icmlsetsymbol{equal}{*}

\begin{icmlauthorlist}
\icmlauthor{Dilip Arumugam}{st}
\icmlauthor{David Abel}{br}
\icmlauthor{Kavosh Asadi}{br}
\icmlauthor{Nakul Gopalan}{br}
\icmlauthor{Christopher Grimm}{um}
\icmlauthor{Jun Ki Lee}{br}
\icmlauthor{Lucas Lehnert}{br}
\icmlauthor{Michael L. Littman}{br}
\end{icmlauthorlist}

\icmlaffiliation{st}{Department of Computer Science, Stanford University}
\icmlaffiliation{br}{Department of Computer Science, Brown University}
\icmlaffiliation{um}{Department of Computer Science \& Engineering, University of Michigan}

\icmlcorrespondingauthor{Dilip Arumugam}{dilip@cs.stanford.edu}

% % You may provide any keywords that you 
% % find helpful for describing your paper; these are used to populate 
% % the "keywords" metadata in the PDF but will not be shown in the document
% \icmlkeywords{boring formatting information, machine learning, ICML}

\vskip 0.3in
]

% AUTHORS                                           
% Dilip Arumugam                                    % David Abel                                        % Enrique Areyan Viqueira                           % Kavosh Asadi                                      % Nakul Gopalan                                     % Christopher Grimm                                 % Jun Ki Lee                                        % Lucas Lehnert                                     % Michael L. Littman  

\printAffiliationsAndNotice{}% leave blank if no need to mention equal contribution
% \printAffiliationsAndNotice{\icmlEqualContribution} % otherwise use the standard text.

\begin{abstract} 
An agent with an inaccurate model of its environment faces a difficult choice: it can ignore the errors in its model and act in the real world in whatever way it determines is optimal with respect to its model.  Alternatively, it can take a more conservative stance and eschew its model in favor of optimizing its behavior solely via real-world interaction. This latter approach can be exceedingly slow to learn from experience, while the former can lead to ``planner overfitting''---aspects of the agent's behavior are optimized to exploit errors in its model. This paper explores an intermediate position in which the planner seeks to avoid overfitting through a kind of regularization of the plans it considers. We present three different approaches that demonstrably mitigate planner overfitting in reinforcement-learning environments.
\end{abstract} 

\section{Introduction}
\label{s:intro}

Model-based reinforcement learning (RL) has proven to be a powerful approach for generating reward-seeking behavior in sequential decision-making environments. For example, a number of methods are known for guaranteeing near optimal behavior in a Markov decision process (MDP) by adopting a model-based approach~\cite{kearns98,brafman02,strehl09}. In this line of work, a learning agent continually updates its model of the transition dynamics of the environment and actively seeks out parts of its environment that could contribute to achieving high reward but that are not yet well learned. Policies, in this setting, are designed specifically to explore unknown transitions so that the agent will be able to exploit (that is, maximize reward) in the long run.

A distinct model-based RL problem is one in which an agent has explored its environment, constructed a model, and must then use this learned model to select the best policy that it can. A straightforward approach to this problem, referred to as the \emph{certainty equivalence approximation}~\cite{dayan96}, is to take the learned model and to compute its optimal policy, deploying the resulting policy in the real environment. The promise of such an approach is that, for environments that are defined by relatively simple dynamics but require complex behavior, a model-based learner can start making high-quality decisions with little data.

Nevertheless, recent large-scale successes of reinforcement learning have not been due to model-based methods but instead derive from value-function based or policy-search methods~\cite{Mnih2015HumanlevelCT,Mnih2016AsynchronousMF,Schulman2017ProximalPO,Hessel2018RainbowCI}.
Attempts to leverage model-based methods have fallen below expectations, particularly when models are learned using function-approximation methods. \citet{jiang15} highlighted a significant shortcoming of the certainty equivalence approximation, showing that it is important to hedge against possibly misleading errors in a learned model. They found that reducing the effective planning depth by decreasing the discount factor used for decision making can result in improved performance when operating in the true environment.

At first, this result might seem counter intuitive---the best way to exploit a learned model can be to exploit it incompletely. However, an analogous situation arises in supervised machine learning. It is well established that, particularly when data is sparse, the representational capacity of supervised learning methods must be restrained or regularized to avoid overfitting. Returning the best hypothesis in a hypothesis class relative to the training data can be problematic if the hypothesis class is overly expressive relative to the size of the training data. The classic result is that testing performance improves, plateaus, then drops as the complexity of the learner's hypothesis class is increased.

In this paper, we extend the results on avoiding planner overfitting via decreasing discount rates by introducing several other ways of regularizing policies in model-based RL. In each case, we see the classic ``overfitting'' pattern in which resisting the urge to treat the learned model as correct and to search in a reduced policy class is repaid by improved performance in the actual environment. We believe this research direction may hold the key to large-scale applications of model-based RL.

% Dave: is an outline necessary for a conference-length paper? (consider deleting if we need space).
Section~\ref{s:definitions} provides a set of definitions, which provide a vocabulary for the paper. Section~\ref{s:gamma} reviews the results on decreasing discount rates, Section~\ref{s:epsilon} presents a new approach that plans using epsilon greedy policies, and Section~\ref{s:search} presents results where policy-search is performed using lower capacity representations of policies. Section~\ref{s:related} summarizes related work and Section~\ref{s:conclusions} concludes.

\section{Definitions}
\label{s:definitions}

An MDP $M$ is defined by the quantities $\langle S, A, R, T, \gamma \rangle$, where $S$ is a state space, $A$ is an action space, $R:S\times A \rightarrow \mathbb{R}$ is a reward function, $T:S\times A \rightarrow \mathbb{P}(S)$ is a transition function, and $0\le \gamma < 1$ is a discount factor. The notation $\mathbb{P}(X)$ represents the set of probability distributions over the discrete set $X$. Given an MDP $M=\langle S, A, R, T, \gamma \rangle$, its optimal value function $Q^*$ is the solution to the Bellman equation: $$Q^*(s,a) = R(s,a) + \gamma \sum_{s'} T(s,a)_{s'} \max_{a'} Q^*(s',a').$$ This function is unique and can be computed by algorithms such as value iteration or linear programming~\cite{Puterman94}.

A (deterministic) policy is a mapping from states to actions, $\pi:S\rightarrow A$. Given a value function $Q: S\times A \rightarrow \mathbb{R}$, the \emph{greedy policy} with respect to $Q$ is $\pi_Q(s) = \argmax_a Q(s,a)$. The greedy policy with respect to $Q^*$ maximizes expected discounted reward from all states. We assume that ties between actions of the greedy policy are broken arbitrarily but consistently so there is always a unique optimal policy for any MDP.
% That is, it is the policy that selects the highest scoring action in each state (breaking ties by splitting up the probability mass.)

The \emph{value function for a policy} $\pi$ \emph{deployed in} $M$ can be found by solving $$Q^\pi_M(s,a) = R(s,a) + \gamma \sum_{s'} T(s,a)_{s'} Q^\pi_M(s',\pi(s')).$$ The value function of the optimal policy is the optimal value function. For a policy $\pi$, we also define the scalar $V^\pi_M = \sum_s w_s Q^\pi_M(s,\pi(s))$, where $w$ is an MDP-specific weighting function over the states. 
% We've now lost the subscript "M" in our Q function.

The \emph{epsilon-greedy policy}~\cite{sutton98} is a stochastic policy where the probability of choosing action $a$ is $(1-\eps) + \eps/|A|$ if $a = \argmax_a Q(s,a)$ and $\eps/|A|$ otherwise.  The optimal epsilon greedy policy for $M$ is not generally the epsilon greedy policy for $Q^*$. Instead, it is necessary to solve a different set of Bellman equations:
\begin{eqnarray*}
\lefteqn{Q^\eps(s,a) = R(s,a) + \gamma \sum_{s'} T(s,a)_{s'}\; \times} \\
& &   \left( (1-\eps) \max_{a'} Q^\eps(s',a') + \eps/|A| \sum_{a'} Q^\eps(s',a')\right).
\end{eqnarray*}
The optimal epsilon-greedy policy plays an important role in the analysis of learning algorithms like SARSA~\cite{rummery94,littman96}.

These examples of optimal policies are with respect to all possible deterministic Markov policies. In this paper, we also consider optimization with respect to a restricted set of policies $\widecheck{\Pi}$. The optimal restricted policy can be found by comparing the scalar values of the policies: $\rho^* = \argmax_{\rho\in\check{\Pi}} V_\rho.$

\section{Decreased Discounting}
\label{s:gamma}

Let $M=\langle S, A, R, T, \gamma \rangle$ be the evaluation environment and $\hat{M}=\langle S, A, R, \hat{T}, \check{\gamma} \rangle$ be the planning environment, where $\hat{T}$ is the learned model and $\check{\gamma}\le \gamma$ is a smaller discount factor used to decrease the effective planning horizon.

\citet{jiang15} proved a bound on the difference between the performance of the optimal policy in $M$ and the performance of the optimal policy in $\hat{M}$ when executed in $M$:
\begin{equation}
\frac{\gamma-\check{\gamma}}{(1-\gamma)(1-\check{\gamma})} \rmax +
   \frac{2 \rmax}{(1-\check{\gamma})^2} \sqrt{\frac{1}{2n} \log{\frac{2 |S| |A| |\Pi_{R,\check{\gamma}}|}{\delta}}}.
\label{e:gammabound}
\end{equation}
Here, $\rmax = \max_{s,a} R(s,a)$ is the largest reward (we assume all rewards are non-negative), $\delta$ is the certainty with which the bound needs to hold, $n$ is the number of samples of each transition used to build the model, and $|\Pi_{R,\check{\gamma}}|$ is the number of distinct possibly optimal policies for $\langle S, A, R, \cdot, \check{\gamma} \rangle$ over the entire space of possible transition functions.

They show that $|\Pi_{R,\check{\gamma}}|$ is an increasing function of $\check{\gamma}$, growing from 1 to as high as $|A|^{|S|}$, the size of the set of all possible deterministic policies. They left open the shape of this function, which is most useful if it grows gradually, but could possibly jump abruptly.

% as it would appear to depend critically on the given reward function $R$.

To help ground intuitions, we estimated the shape of $|\Pi_{R,\check{\gamma}}|$ over a set of randomly generated MDPs. Following \citet{jiang15}, a ``ten-state chain'' MDP $M = \langle S, A, T, R, \gamma\rangle$ is drawn such that, for each state--action pair, $(s,a) \in S \times A$, the transition function $T(s,a)$ is constructed by choosing $5$ states at random from $S$, then assigning probabilities to these states by drawing $5$ independent samples from a uniform distribution over $[0,1]$ and normalizing the resulting numbers.  The probability of transition to any other state is zero. For each state--action pair $(s,a)\in S \times A$, the reward $R(s,a)$ is drawn from a uniform distribution with support $[0,1]$. For our MDPs, we chose $|S| = 10$, $|A| = 2$ and $\gamma = 0.99$. We examined $\check{\gamma}$ in $\{0.0, 0.1,...,0.9, 0.99\}$,
% in {0.0, 0.1,...,0.9, 1.0}, sampling a single MDP from the RandomMDP distribution (as in Jiang et.al.), sampling a random transition function,
computed optimal policies by running value iteration with $10$ iterations. We sampled repeatedly until no new optimal policy was discovered for $5000$ consecutive samples.

% . For the plot involving epsilon, a sampled transition function would be epsilon-softened according to the equation you already have defined in section 2 of the draft. The error bars in both figures represent 95% confidence intervals.          

Figure~\ref{f:Pigamma} is an estimate of how $|\Pi_{R,\check{\gamma}}|$ grows in this class of randomly generated MDPs. Fortunately, the set appears to grow gradually, making $\check{\gamma}$ an effective parameter for fighting planner
overfitting.

Estimating $|\Pi_{R,\check{\gamma}}| \approx 11 e^{\check{\gamma}}-10$, Figure~\ref{f:gammabound} shows the bound of Equation~\ref{e:gammabound} applied to the random MDP distribution ($|S|=10$, $|A|=2$, $\rmax=1$, $\gamma=.99$).
% (exp(gamma)-1)*11+1                                

Note that the expected ``U'' shape is visible, but only for a relatively narrow range of values of $n$. For under $50$k samples, the minimal loss bound is achieved for $\check{\gamma} = 0$.  For over $900$k samples, the minimal loss bound is achieved for $\check{\gamma}=\gamma$. (Note that the pattern shown here is relatively insensitive to the estimated shape of $|\Pi_{R,\check{\gamma}}|$.)

% Indeed, as suggested by Jiang et al.~\cite{jiang15}, increasing $\check{gamma}$ results in increasing

\begin{figure}
\centering
\includegraphics[width=3.5in]{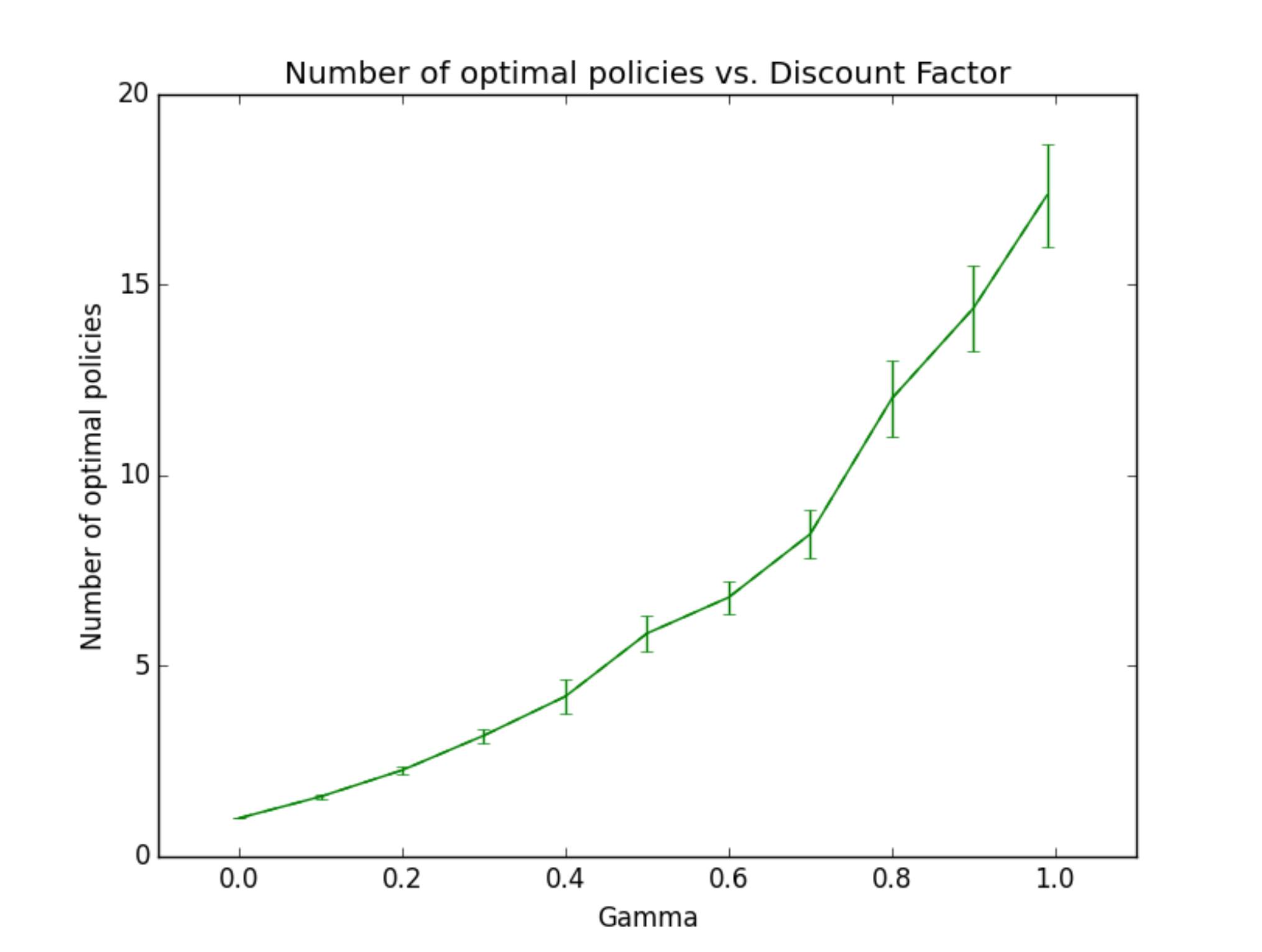}
\caption{The number of distinct optimal policies found generating random transition functions for a fixed reward function varying $\check{\gamma}$ in random MDPs.}
\label{f:Pigamma}
\end{figure}

\begin{figure}
\centering
\includegraphics[width=3.5in]{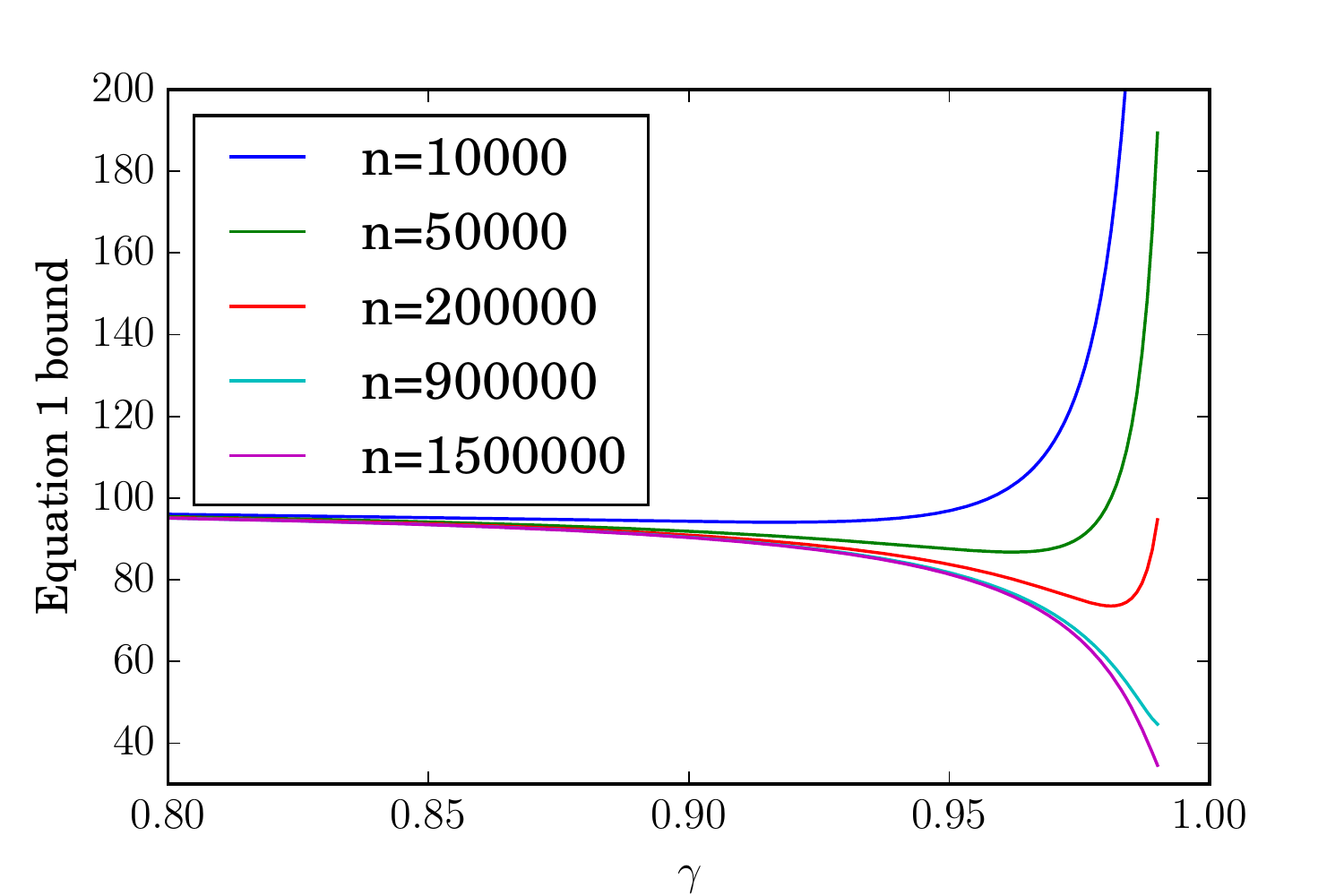}
% \includegraphics[height=2in]{thm2_loss_bound_exp_pi_zoom_in.pdf}
% % \includegraphics[height=2in]{thm2_loss_bound.pdf} \\
% % \includegraphics[height=2in]{thm2_loss_bound_exp_pi.pdf} \\
% % \includegraphics[height=2in]{thm2_loss_bound_interpolate_pi.pdf} \\
% % \includegraphics[height=2in]{thm2_loss_bound_interpolate_pi_zoom_in.pdf} \\
% % \includegraphics[height=2in]{thm2_loss_bound_tanh_pi.pdf} \\
% % \includegraphics[height=2in]{thm2_loss_bound_tanh_pi_zoom_in.pdf} \\
% % \includegraphics[height=2in]{thm2_loss_bound_zoom_in.pdf}
\caption{Bound on policy loss for randomly generated MDPs, showing the tightest bound for intermediate values of $\gamma$ for intermediate amounts of data.}
\label{f:gammabound}
\end{figure}

For actual MDPs, the ``U'' shape is much more robust. Using this same distribution over MDPs, Figure~\ref{f:gamma} replicates an empirical result of \citet{jiang15} showing that intermediate values of $\check{\gamma}$ are most successful and that this value grows as the model used in planning becomes more accurate (having been trained on more trajectories). We sampled MDPs from the same random distribution and, for each value of $n \in \{5, 10, 20, 50\}$, we generated $1000$ datasets each consisting of $n$ trajectories of length $10$ starting from a state selected uniformly at random and executing a random policy. In all experiments, the estimated MDP ($\hat{M}$) was computed using maximum likelihood estimates of $T$ and $R$ with no additive Gaussian noise. Optimal policies were all found by running value iteration in the estimated MDP $\hat{M}$. The empirical loss (Equation~14 of \citet{jiang15}) was computed for each value of $\gamma \in \{0.0, 0.1, 0.2, \ldots, 0.9, 0.99\}$.  The error bars shown in the figure represent $95$\% confidence intervals.

\begin{figure}
\centering
\includegraphics[width=3.5in]{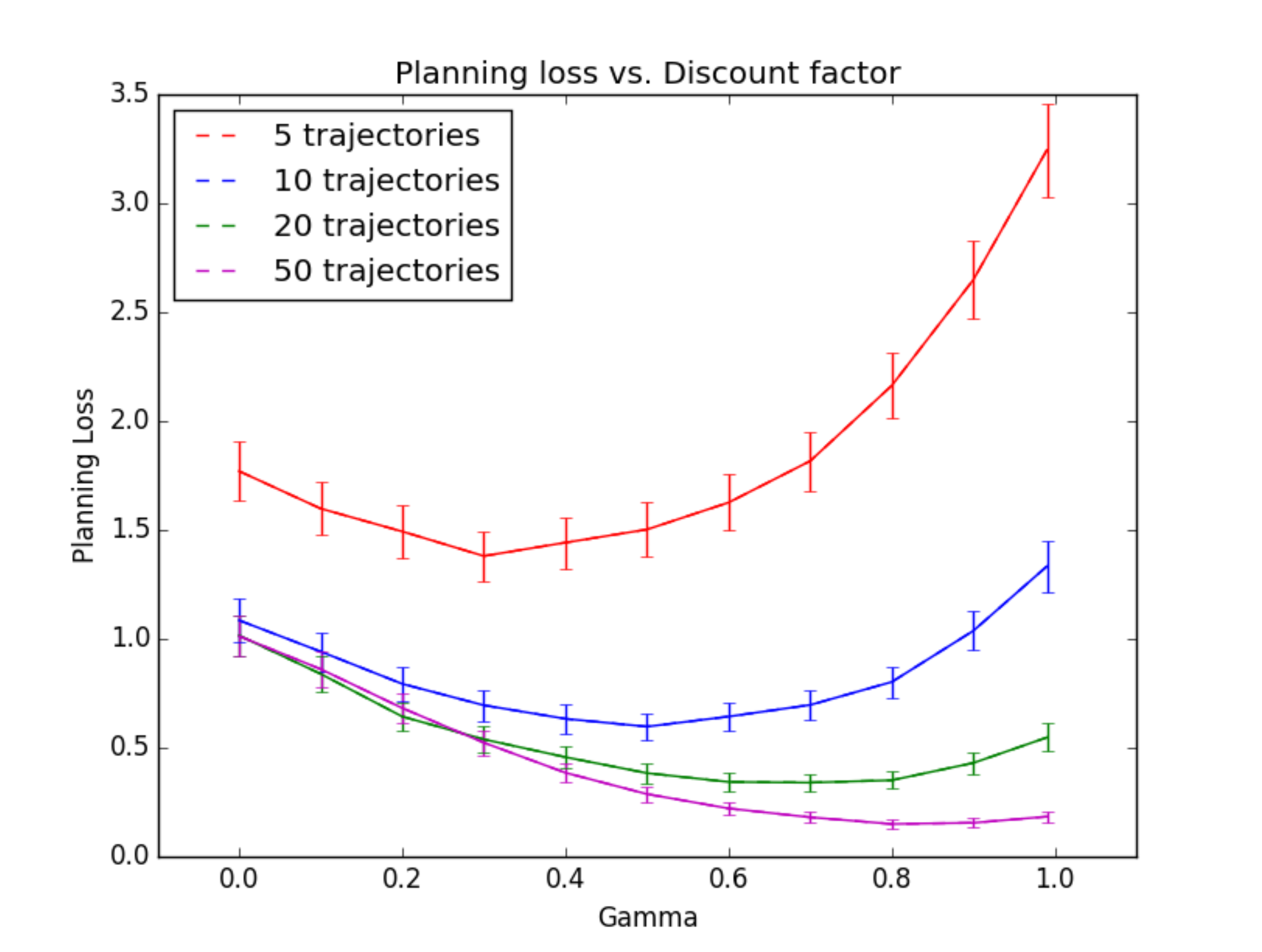}
\caption{Reducing the discount factor used in planning combats planner overfitting in random MDPs.}
\label{f:gamma}
\end{figure}

% section{Regularization Methods}                    
% decrease $\gamma$                                 
% decrease horizon for planning                      
% decrease number of iteration of value iteration    
% decrease epsilon used to terminate value iteration 
% DQN?                                               

% \newpage
\section{Increased Exploration}
\label{s:epsilon}

In this section, we consider a novel regularization approach in which planning is performed over the set of epsilon-greedy policies. The intuition here is that adding noise to the policies makes it harder for them to be tailored explicitly to the learned model, resulting in less planner overfitting.

In Section~\ref{s:bound}, a general bound is introduced and then Section~\ref{s:epsbound} applies the bound to the set of epsilon greedy policies.

\subsection{General Bounds}
\label{s:bound}

We can relate the structure of a restricted set of policies $\check{\Pi}$ to the performance in an approximate model with the following theorem.
 
% Let $V^p_M(s) = R(s,p(s)) + \sum_{s'} T(s,p(s))_{s'} V^p_M(s')$, where $p$ is a policy. Let $\pi$ be the optimal policy for $M$ and $\hat{\pi}$ be the optimal policy for $\hat{M}$. That is, if $$Q_M(s,a) = R(s,a) + \gamma \sum_{s'} T(s,a)_{s'} \max_{a'} Q_M(s',a'),$$ then $\pi(s) = \argmax_a Q_M(s,a)$ and $\hat{\pi}(s) = \argmax_a Q_{\hat{M}}(s,a)$. Let $\rho$ be the optimal policy in $\check{\Pi}$ for $M$ and $\hat{\rho}$ be the optimal policy in $\check{\Pi}$ for $\hat{M}$. 
% That is, for some pre-defined initial state        % $\rho = \argmax_{\pi\in \check{\Pi}} V^\pi_M$ and
% $\hat{\rho} = \argmax_{\pi\in \check{\Pi}} V^\pi_{\hat{M}}$.
% $ be the optimal policy in $\check{\Pi}$ for $M$ and $\hat{\rho}$ be the optimal policy in $\check{\Pi}$ for $\hat{M}$. That is,

\begin{theorem}
Let $\check{\Pi}$ be a set of policies for an MDP $M=\langle S, A, T, R, \gamma \rangle$. Let $\hat{M}=\langle S, A, \hat{T}, R, \gamma \rangle$ be an MDP like $M$, but with a different transition function. Let $\pi$ be the optimal policy for $M$ and $\hat{\pi}$ be the optimal policy for $\hat{M}$. Let $\rho$ be the optimal policy in $\check{\Pi}$ for $M$ and $\hat{\rho}$ be the optimal policy in $\check{\Pi}$ for $\hat{M}$. Then,
% The loss due to planning in a restricted set of policies $\check{\Pi}$ on an approximate model compared to the optimal policy is bounded by:
$$|V^{\pi}_M - V^{\hat{\rho}}_M| \le |V^{\pi}_M - V^{\rho}_M| +  2 \max_{p\in\check{\Pi}} |V^p_M - {V}^p_{\hat{M}} |.$$
\label{th:general}
\end{theorem}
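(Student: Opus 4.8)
The plan is to reduce the claim to a one-sided inequality and then telescope through the approximate model $\hat{M}$. First I would observe that $\pi$ is optimal among \emph{all} policies in $M$, so $V^{\pi}_M \ge V^{\rho}_M$ and $V^{\pi}_M \ge V^{\hat{\rho}}_M$ (using $V^{p}_M = \sum_s w_s Q^{p}_M(s,p(s))$ and $Q^{\pi}_M = Q^{*}_M$). Hence both the absolute value on the left-hand side and the first term on the right lose their signs, and after cancelling the common $V^{\pi}_M$ the desired inequality becomes simply
$$V^{\rho}_M - V^{\hat{\rho}}_M \le 2\max_{p\in\check{\Pi}} |V^p_M - V^p_{\hat{M}}|.$$

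Next I would insert the values of $\rho$ and $\hat{\rho}$ evaluated under the \emph{other} model and use the telescoping identity
$$V^{\rho}_M - V^{\hat{\rho}}_M = \big(V^{\rho}_M - V^{\rho}_{\hat{M}}\big) + \big(V^{\rho}_{\hat{M}} - V^{\hat{\rho}}_{\hat{M}}\big) + \big(V^{\hat{\rho}}_{\hat{M}} - V^{\hat{\rho}}_M\big).$$
The middle term is nonpositive because $\hat{\rho}$ is by definition the best policy in $\check{\Pi}$ with respect to $\hat{M}$, so $V^{\rho}_{\hat{M}} \le V^{\hat{\rho}}_{\hat{M}}$. The first and third terms are each at most $\max_{p\in\check{\Pi}} |V^p_M - V^p_{\hat{M}}|$, since both $\rho$ and $\hat{\rho}$ belong to $\check{\Pi}$. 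Summing the three contributions yields the displayed bound, and substituting back into the reduced inequality completes the proof.

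I do not expect a serious obstacle here: this is the familiar ``optimization error plus twice the approximation error'' decomposition. The only points needing care are (i) justifying that the absolute values collapse, which rests on $\pi$ being unconstrained-optimal in $M$, and (ii) ensuring the approximation term is taken as a maximum over all of $\check{\Pi}$ rather than only over $\rho$ and $\hat{\rho}$ — immediate here, but it is precisely what makes the bound sensitive to the richness of $\check{\Pi}$, which is what gets exploited when the bound is specialized to epsilon-greedy policy classes in Section~\ref{s:epsbound}.
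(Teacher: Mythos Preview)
Your argument is correct and is essentially the paper's own proof: both hinge on the same three-term telescope through $\hat{M}$, drop the middle term using optimality of $\hat{\rho}$ in $\check{\Pi}$ for $\hat{M}$, and bound the two cross-model terms by the maximum over $\check{\Pi}$. The only cosmetic difference is ordering---you strip the absolute values first via optimality of $\pi$ and then telescope $V^{\rho}_M - V^{\hat{\rho}}_M$, whereas the paper telescopes $V^{\pi}_M - V^{\hat{\rho}}_M$ directly (carrying the $V^{\pi}_M - V^{\rho}_M$ term) and invokes optimality of $\pi$ only at the end.
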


\begin{proof}
We can write
\begin{eqnarray}
\lefteqn{V^{\pi}_M - V^{\hat{\rho}}_M}
% \nonumber \\                                                & =& (V_{\pi} - V_{\rho}) + (V_{\rho} - \hat{V}_{\rho}) + (\hat{V}_{\rho} - \hat{V}_{\hat{\rho}}) + (\hat{V}_{\hat{\rho}} - V_{\hat{\rho}})
\nonumber \\
& =& (V^{\pi}_M - V^{\rho}_M) + (V^{\rho}_M - {V}^{\rho}_{\hat{M}}) \nonumber \\
& & - (V^{\hat{\rho}}_M-{V}^{\hat{\rho}}_{\hat{M}}) - ({V}^{\hat{\rho}}_{\hat{M}} - {V}^{\rho}_{\hat{M}} )
\nonumber \\
&\le& (V^{\pi}_M - V^{\rho}_M) + (V^{\rho}_M - {V}^{\rho}_{\hat{M}}) - (V^{\hat{\rho}}_M-{V}^{\hat{\rho}}_{\hat{M}})
\label{e:opt} \\
&\le& |V^{\pi}_M - V^{\rho}_M| + |V^{\rho}_M - {V}^{\rho}_{\hat{M}}| + |V^{\hat{\rho}}_M-{V}^{\hat{\rho}}_{\hat{M}}|
\nonumber \\
&\le& |V^{\pi}_M - V^{\rho}_M| + 2 \max_{p\in\check{\Pi}} |V^p_M - {V}^p_{\hat{M}}|.
\label{e:restricted}
\end{eqnarray}
Equation~\ref{e:opt} follows from the fact that ${V}^{\hat{\rho}}_{\hat{M}} - {V}^{\rho}_{\hat{M}} \ge 0$, since $\hat{\rho}$ is chosen as optimal among the set of restricted policies with respect to $\hat{M}$. Equation~\ref{e:restricted} follows because both $\rho$ and $\hat{\rho}$ are included in $\check{\Pi}$.  The theorem follows from the fact that $V^{\pi}_M - V^{\hat{\rho}}_M \ge 0$ since $\pi$ is chosen to be optimal in $M$.
\end{proof}

Theorem~\ref{th:general} shows that the restricted policy set $\check{\Pi}$ impacts the resulting value of the plan in two ways. First, the bigger the class is, the closer $V^{\rho}_M$ becomes to $V^{\pi}_M$---that is, the more policies we consider, the closer to optimal we become. At the same time, $\max_{p\in\check{\Pi}} |V^p_M - {V}^p_{\hat{M}}|$ grows as $\check{\Pi}$ gets larger as there are more policies that can differ in value between $M$ and $
\hat{M}$.

\citet{jiang15} leverage this structure in the specific case of defining $\check{\Pi}$ by optimizing policies using a smaller value for $\gamma$. Our Theorem~\ref{th:general} generalizes the idea to arbitrary restricted policy classes and arbitrary pairs of MDPs $M$ and $\hat{M}$.

% \anote{ML}{zzz ML: Maybe we can restate this theorem in a way that makes the overfitting curve general.}
In particular, Consider a sequence of $\Pi_i$ such that $\Pi_i \subseteq\; \Pi_{i+1}$. Then, the first part of the bound is monotonically non-increasing (it goes down each time a better policy is included in the set) and the second part of the bound is monotonically non-decreasing (it goes up each time a policy is included that magnifies the difference in performance possible in the two MDPs).

In Lemma~\ref{l:simulation}, we show that the particular choice of $\hat{M}$ that comes from statistically sampling transitions as in certainty equivalence leads to a bound on $|V^p_M - {V}^p_{\hat{M}}|$, for an arbitrary policy $p$.

\begin{lemma}
Given true MDP $M$,
% =\langle S, A, T, R, \gamma$.                      
let $\hat{M}$ be an MDP comprised of a reward function $R$ and transition function $\hat{T}$ estimated from $n$ samples for each state--action pair, and let $p$ be a policy, then the following holds with probability at least $1-\delta$:
$$                                                   
2 \max_{p\in\check{\Pi}}|V^p_M - {V}^p_{\hat{M}}| \le                        
 \frac{2 \rmax}{(1-\gamma)^2} \sqrt{\frac{1}{2n} \log \frac{2|S||A||\check{\Pi}|}{\delta}}.
$$
\label{l:simulation}
\end{lemma}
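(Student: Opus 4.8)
The plan is to establish the per-policy bound
$$|V^p_M - V^p_{\hat{M}}| \le \frac{\rmax}{(1-\gamma)^2}\sqrt{\tfrac{1}{2n}\log\tfrac{2|S||A||\check{\Pi}|}{\delta}}$$
simultaneously for every $p\in\check{\Pi}$ on a single high-probability event, and then take the maximum over $\check{\Pi}$ and multiply by two. This is a \emph{simulation lemma} argument in the style of \citet{kearns98,strehl09}, specialized to the case in which $M$ and $\hat{M}$ share the reward function $R$ and differ only in the transition function, with $\hat{T}(s,a)$ the empirical next-state distribution from $n$ i.i.d.\ samples of $T(s,a)$.

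First I would fix a policy $p$ and subtract the two policy-evaluation Bellman equations for $Q^p_M$ and $Q^p_{\hat{M}}$, inserting the cross term $\gamma\sum_{s'}\hat{T}(s,a)_{s'}Q^p_M(s',p(s'))$ to obtain, for every $(s,a)$,
\begin{align*}
Q^p_M(s,a) - Q^p_{\hat{M}}(s,a) &= \gamma\sum_{s'}\bigl(T(s,a)_{s'} - \hat{T}(s,a)_{s'}\bigr)Q^p_M(s',p(s')) \\
&\quad + \gamma\sum_{s'}\hat{T}(s,a)_{s'}\bigl(Q^p_M(s',p(s')) - Q^p_{\hat{M}}(s',p(s'))\bigr).
\end{align*}
The second term is at most $\gamma\,\norm{Q^p_M - Q^p_{\hat{M}}}$ in absolute value because $\hat{T}(s,a)$ is a probability distribution. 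For the first term, the key point is that $Q^p_M$ depends only on $M$ and $p$, not on the samples, so $\sum_{s'}\hat{T}(s,a)_{s'}Q^p_M(s',p(s'))$ is an average of $n$ i.i.d.\ draws of a quantity in $[0,\rmax/(1-\gamma)]$ with mean $\sum_{s'}T(s,a)_{s'}Q^p_M(s',p(s'))$; Hoeffding's inequality then bounds the first term by $\gamma\frac{\rmax}{1-\gamma}\sqrt{\frac{1}{2n}\log\frac{2}{\delta'}}$ with probability at least $1-\delta'$ for that fixed $(s,a)$ and $p$.

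Next I would union bound over all $|S||A|$ state--action pairs and all $|\check{\Pi}|$ policies, taking $\delta' = \delta/(|S||A||\check{\Pi}|)$, so that with probability at least $1-\delta$ the first-term bound holds uniformly. On that event, taking the supremum over $(s,a)$ in the displayed identity yields the self-referential inequality $\norm{Q^p_M - Q^p_{\hat{M}}} \le \gamma\frac{\rmax}{1-\gamma}\sqrt{\frac{1}{2n}\log\frac{2|S||A||\check{\Pi}|}{\delta}} + \gamma\norm{Q^p_M - Q^p_{\hat{M}}}$, which rearranges (using $\gamma\le 1$) to $\norm{Q^p_M - Q^p_{\hat{M}}} \le \frac{\rmax}{(1-\gamma)^2}\sqrt{\frac{1}{2n}\log\frac{2|S||A||\check{\Pi}|}{\delta}}$. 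Since $V^p_M - V^p_{\hat{M}} = \sum_s w_s\bigl(Q^p_M(s,p(s)) - Q^p_{\hat{M}}(s,p(s))\bigr)$ and $w$ is a distribution over states, $|V^p_M - V^p_{\hat{M}}|$ is no larger than this sup-norm; because the conditioning event already covers every $p\in\check{\Pi}$, I can take $\max_{p\in\check{\Pi}}$ and multiply by $2$ to finish.

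The main obstacle I anticipate is the bookkeeping around independence and the union bound: one must apply Hoeffding to a function ($Q^p_M$) that is fixed before the samples are drawn, so that $\hat{T}(s,a)$ genuinely concentrates around $T(s,a)$ against it, and the union must cover exactly the events the contraction step needs---every state--action pair, so the model-error bound is uniform in $(s,a)$ and the supremum is licensed, and every policy in $\check{\Pi}$, so the conclusion applies to the maximizing policy. The remaining ingredients---the cross-term insertion, the geometric collapse producing the $(1-\gamma)^{-2}$ factor, and the passage from $Q$ to $V$---are routine.
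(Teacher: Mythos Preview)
Your argument is correct and is precisely the standard simulation-lemma proof (Hoeffding on the value-weighted empirical next-state average, union bound over $|S||A||\check{\Pi}|$ events, then the $(1-\gamma)$ contraction). The paper does not actually prove this lemma in-line---it simply cites Theorem~2/Lemma~2 of \citet{jiang15} and notes that their proof goes through for an arbitrary finite $\check{\Pi}$---so your proposal fills in exactly the details the paper outsources.
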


\begin{proof}
This lemma is a variation of the classic ``Simulation Lemma''~\cite{kearns98,strehl09} and is proven in this form as Theorem 2 of \citet{jiang15} (specifically, Lemma 2).
Note that their proof, though stated with respect to a particular choice of $\check{\Pi}$ set, holds in this general form.
\end{proof}

\subsection{Bound for Epsilon-Greedy Policies}
\label{s:epsbound}

It remains to show that $\norm{V^{\pi}_M - V^{\hat{\rho}}_M}$ is bounded when restricted to epsilon-greedy policies. For the case of planning with a decreased discount factor, \citet{jiang15} provide a bound for this quantity in their Lemma~1. For the case of epsilon-greedy policies, the corresponding bound is proven in the following lemma.
% we have proven a corresponding bound in Lemma~\ref{l:soft}.

\begin{lemma}
\label{l:soft}
For any MDP $M$, the difference in value of the optimal policy $\pi$ and the optimal $\eps$-greedy policy $\rho$ is bounded by:
\begin{equation}
| V_M^{\pi} - V_M^{\rho} | \le \rmax \frac{\eps}{(1-\gamma)(1 - \gamma (1-\eps))}. \nonumber
\end{equation}
\end{lemma}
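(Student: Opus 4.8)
The plan is to bound $V^\pi_M - V^\rho_M$ by comparing $\rho$ against one specific, easy-to-analyze element of the $\eps$-greedy policy class: the policy $\pi_\eps$ that is $\eps$-greedy with respect to $Q^*$ (so its ``greedy'' action at $s$ is exactly $\pi(s)$). Since $\pi_\eps$ is an $\eps$-greedy policy and $\rho$ is by definition optimal among all such policies, $V^{\pi_\eps}_M \le V^\rho_M$; and since $\pi$ is globally optimal, $V^\rho_M \le V^\pi_M$. Hence $0 \le V^\pi_M - V^\rho_M \le V^\pi_M - V^{\pi_\eps}_M$, and it is enough to bound the last quantity. I will actually bound the state-value gap in $\infty$-norm, which is the quantity referred to in the section, and the scalar statement follows since the state weights are nonnegative and sum to one.

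Next I would write a one-step recursion for $\Delta(s) := V^\pi_M(s) - V^{\pi_\eps}_M(s) \ge 0$ (the inequality because $V^\pi_M = V^*$ dominates every policy value pointwise). Splitting $\pi_\eps$ at $s$ into its ``with probability $1-\eps$ play $\pi(s)$'' and ``with probability $\eps$ play uniformly at random'' branches gives
$$V^{\pi_\eps}_M(s) = (1-\eps)\, Q^{\pi_\eps}_M(s,\pi(s)) + \frac{\eps}{|A|}\sum_a Q^{\pi_\eps}_M(s,a),$$
where $Q^{\pi_\eps}_M(s,a) = R(s,a) + \gamma\sum_{s'} T(s,a)_{s'} V^{\pi_\eps}_M(s')$, and hence
$$\Delta(s) = (1-\eps)\left(V^*(s) - Q^{\pi_\eps}_M(s,\pi(s))\right) + \eps\left(V^*(s) - \frac{1}{|A|}\sum_a Q^{\pi_\eps}_M(s,a)\right).$$
For the first parenthesis I would use $V^*(s) = Q^*(s,\pi(s))$ together with the fact that $Q^*(\cdot,\cdot)$ and $Q^{\pi_\eps}_M(\cdot,\cdot)$ have identical reward and one-step transition terms, so it equals exactly $\gamma\sum_{s'} T(s,\pi(s))_{s'}\Delta(s') \le \gamma\,\norm{\Delta}$. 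For the second parenthesis, non-negativity of $R$ gives $\frac{1}{|A|}\sum_a Q^{\pi_\eps}_M(s,a) \ge 0$ and $V^*(s) \le \rmax/(1-\gamma)$, so the parenthesis is at most $\rmax/(1-\gamma)$. Putting these together, $\Delta(s) \le (1-\eps)\gamma\,\norm{\Delta} + \eps\,\rmax/(1-\gamma)$ for every $s$.

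Finally I would take the supremum over $s$ to obtain $\norm{\Delta} \le (1-\eps)\gamma\,\norm{\Delta} + \eps\,\rmax/(1-\gamma)$ and solve for $\norm{\Delta}$, using $1 - (1-\eps)\gamma = 1 - \gamma(1-\eps) > 0$, which yields $\norm{\Delta} \le \rmax\,\eps/\bigl((1-\gamma)(1-\gamma(1-\eps))\bigr)$; the lemma then follows from $V^\pi_M - V^\rho_M \le V^\pi_M - V^{\pi_\eps}_M \le \norm{\Delta}$.

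\textbf{Main obstacle.} The step that requires care --- and the reason the denominator carries the factor $1-\gamma(1-\eps)$ rather than the weaker $(1-\gamma)$ one gets from a naive simulation-lemma-style argument --- is keeping the two branches of the $\eps$-greedy recursion separate: only the $(1-\eps)$-fraction actually propagates the error $\Delta$ into the next state, while the $\eps$-fraction is a fresh term that can be bounded crudely by the maximal return $\rmax/(1-\gamma)$ without any further unrolling. Folding the random-action branch into the recursion would cost a full $1/(1-\gamma)$ instead of $1/(1-\gamma(1-\eps))$, so the tightness of the bound hinges on isolating it.
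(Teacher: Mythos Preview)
Your proposal is correct. You and the paper share the same opening reduction: since $\rho$ is optimal among $\eps$-greedy policies and the $\eps$-soft version $\pi_\eps$ of the optimal policy $\pi$ is itself $\eps$-greedy, it suffices to bound $\norm{V^\pi_M - V^{\pi_\eps}_M}$. Where you diverge is in how that gap is controlled. The paper unrolls both value functions as infinite matrix-power series $\sum_t \gamma^{t-1}[T^{\cdot}]^{t-1}R^{\cdot}$, applies the elementwise inequality $[(1-\eps)T^\pi + \eps T^u]^{t-1} \ge (1-\eps)^{t-1}[T^\pi]^{t-1}$, and then sums the resulting telescoped geometric series to obtain the factor $\eps/\bigl((1-\gamma)(1-\gamma(1-\eps))\bigr)$. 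You instead write a one-step Bellman recursion for $\Delta = V^\pi_M - V^{\pi_\eps}_M$, isolate the $(1-\eps)$-branch (which propagates $\Delta$ with contraction factor $\gamma$) from the $\eps$-branch (which you cap crudely by $\rmax/(1-\gamma)$), and solve the resulting fixed-point inequality $\norm{\Delta} \le \gamma(1-\eps)\norm{\Delta} + \eps\,\rmax/(1-\gamma)$. Your route is shorter and avoids the matrix-power bookkeeping; the paper's route makes the ``effective discount'' $\gamma(1-\eps)$ appear explicitly through the series $\sum_t \gamma^{t-1}(1-(1-\eps)^t)$. Both yield exactly the same constant, and both rely on the paper's standing assumption that rewards are nonnegative (you use it to lower-bound $Q^{\pi_\eps}_M$ by zero; the paper uses it to drop the $\eps R^u$ term).
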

\begin{proof}
Let $\pi$ be the optimal policy for $M$ and $u$ be a policy that selects actions uniformly at random. We can define $\pi_\eps$, an $\eps$-greedy version of $\pi$, as:
\begin{equation}
\pi_\eps^a (s) = (1-\eps) \pi^a(s) + \eps u^a(s).
\end{equation}
where $\pi^a(s)$ refers to the probability associated with action $a$ under a policy $\pi$.
Let $T^\pi$ denote the transition matrix from states to states under policy $\pi$. Using the above definition, we can decompose the transition matrix into
\begin{equation}
T^{\pi_\eps} = (1-\eps) T^{\pi} + \eps T^{u}.
\end{equation}
Similarly, we have for the reward vector over states,
\begin{equation}
R^{\pi_\eps} = (1-\eps) R^{\pi} + \eps R^{u}.
\end{equation}
To obtain our bound, note
\begin{eqnarray}
\lefteqn{V_M^{\pi} - V_M^{\pi_\eps}} \nonumber \\
&=& \sum_{t=1}^\infty \gamma^{t-1} \left[ T^{\pi} \right]^{t-1} R^{\pi} - \gamma^{t-1}  \left[ T^{\pi_\eps} \right]^{t-1} R^{\pi_\eps} \nonumber \\
&=& \sum_{t=1}^\infty \gamma^{t-1} \left[ T^{\pi} \right]^{t-1} R^{\pi} \nonumber \\&&- \gamma^{t-1}  \left[ T^{\pi_\eps} \right]^{t-1} \left[ (1-\eps) R^{\pi} + \eps R^{u} \right] \nonumber \\
&=& \sum_{t=1}^\infty \gamma^{t-1} \left[ T^{\pi} \right]^{t-1} R^{\pi} \nonumber \\&&- \gamma^{t-1}  \left[ T^{\pi_\eps} \right]^{t-1} (1-\eps) R^{\pi} - \underbrace{\gamma^{t-1}  \left[ T^{\pi_\eps} \right]^{t-1} \eps R^{u}}_{\ge 0} \nonumber \\
&\le&  \sum_{t=1}^\infty \gamma^{t-1} \left[ T^{\pi} \right]^{t-1} R^{\pi} \nonumber \\
&&- \gamma^{t-1}  \left[ (1-\eps) T^{\pi} + \eps T^{u} \right]^{t-1} (1-\eps) R^{\pi} . \label{eq:lem-2-pf-1}
%&= || \sum_{t=1}^\infty \gamma^{t-1} (T^{\pi})^{t-1} R^{\pi} - \gamma^{t-1} \left[ (1-\eps) T^{\pi} + \eps T^{u} \right]^{t-1} \left[ (1-\eps) R^{\pi} + \eps R^{u} \right] ||_\infty   
\end{eqnarray}
Since $T^\pi$ is a transition matrix, all its entries lie in $[0,1]$; hence, we have the following element-wise matrix inequality:
\begin{equation}
 \left[ (1-\eps) T^{\pi} + \eps T^{u} \right]^{t-1} \ge  \left[ (1-\eps) T^{\pi} \right]^{t-1} . \label{eq:lem-2-pf-2}
\end{equation}
Plugging inequality~\ref{eq:lem-2-pf-2} into the bound~\ref{eq:lem-2-pf-1} results in
\begin{eqnarray*}
\lefteqn{V_M^{\pi} - V_M^{\pi_\eps}} \\
&\le& \sum_{t=1}^\infty \gamma^{t-1} \left[ T^{\pi} \right]^{t-1} R^{\pi} - \gamma^{t-1}  (1-\eps)^t
 \left[ T^{\pi} \right]^{t-1} R^{\pi} \\
&\le& \norm{ \sum_{t=1}^\infty \gamma^{t-1} (1 - (1-\eps)^t ) \left[ T^{\pi} \right]^{t-1} R^{\pi} } \\
\end{eqnarray*}
Since $\norm{ R^\pi } = \rmax$ we can upper bound the norm of difference of the values vector 
% (zzz but, $V$ is defined to be a scalar) 
over states with
\begin{eqnarray*}
\norm{V_M^{\pi} - V_M^{\pi_\eps}}
% &\le& \norm{ \sum_{t=1}^\infty \gamma^{t-1} (1 - (1-\eps)^t ) \left[ T^{\pi} \right]^{t-1} R^{\pi} } \\
&\le& \sum_{t=1}^\infty \gamma^{t-1} (1 - (1-\eps)^t ) \rmax \\
&=& \frac{\eps}{(1-\gamma) (\eps \gamma - \gamma + 1)} \rmax.
\end{eqnarray*}
Using this inequality, we can bound the difference in value of the optimal policy $\pi$ and the optimal $\eps$-greedy policy $\rho$ by
\begin{eqnarray*}
\norm{ V_M^{\pi} - V_M^{\rho} } &\le& \norm{ V_M^{\pi} - V_M^{\pi_\eps} } \\
&\le& \frac{\eps}{(1-\gamma) (\eps \gamma - \gamma + 1)} \rmax.
\end{eqnarray*}
\end{proof}

Figure~\ref{f:Piepsilon} is an estimate of how $|\Pi_{R,\epsilon}|$ grows over the class of randomly generated MDPs. Again, the set appears to grow gradually as $\epsilon$ decreases, making $\epsilon$ another effective parameter for fighting planner overfitting.

\begin{figure}
\centering
\includegraphics[width=3.5in]{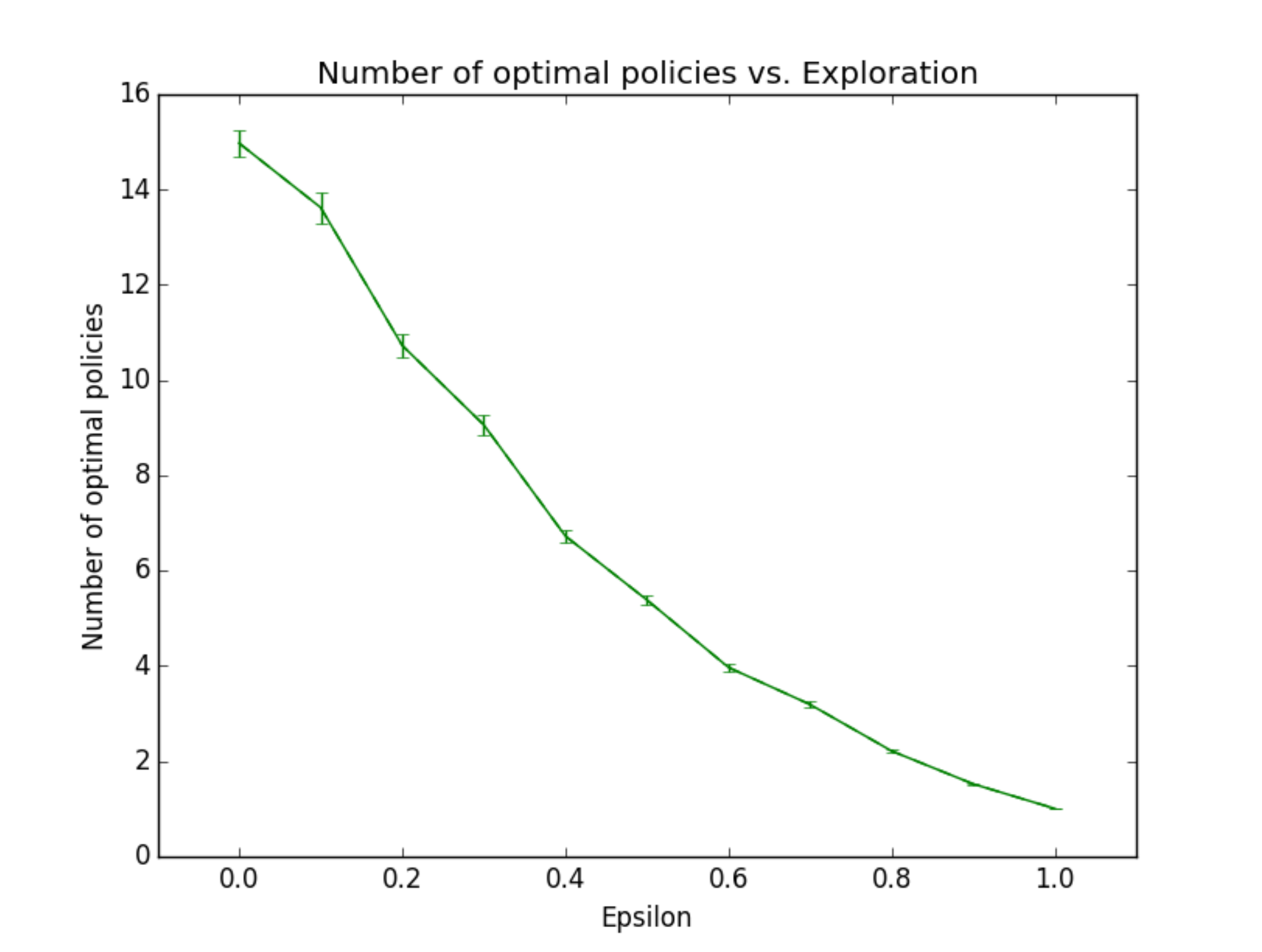}
\caption{The number of distinct optimal policies found generating random transition functions for a fixed reward function varying $\epsilon$.}
\label{f:Piepsilon}
\end{figure}

% zzz fit our bound?                                 

% \begin{theorem}
% The loss due to planning in a restricted set of policies on an approximate model compared to the optimal policy is bounded by:
% \label{th:general2}
% \end{theorem}

% \subsection{Bound for Epsilon-Greedy Policies}
% \label{s:epsbound}

\subsection{Empirical Results}

We evaluated this exploration-based regularization approach in the distribution over MDPs used in Figure~\ref{f:gamma}. Figure~\ref{f:epsilon} shows results for each value of $\epsilon \in \{0.0, 0.1, 0.2, \ldots, 0.9, 1.0\}$. Here, the maximum likelihood transition function $\hat{T}$ was replaced with the epsilon-softened transition function $T_{\epsilon}$. In contrast to the previous figure, regularization increases as we go to the right.  Once again, we see that intermediate values of $\epsilon$ are most successful and the best value of $\epsilon$ decreases as the model used in planning becomes more accurate (having been trained on more trajectories). The similarity to Figure~\ref{f:gamma} is striking---in spite of the difference in approach, it is essentially the mirror image of Figure~\ref{f:gamma}.

\begin{figure}
\centering
\includegraphics[width=3.5in]{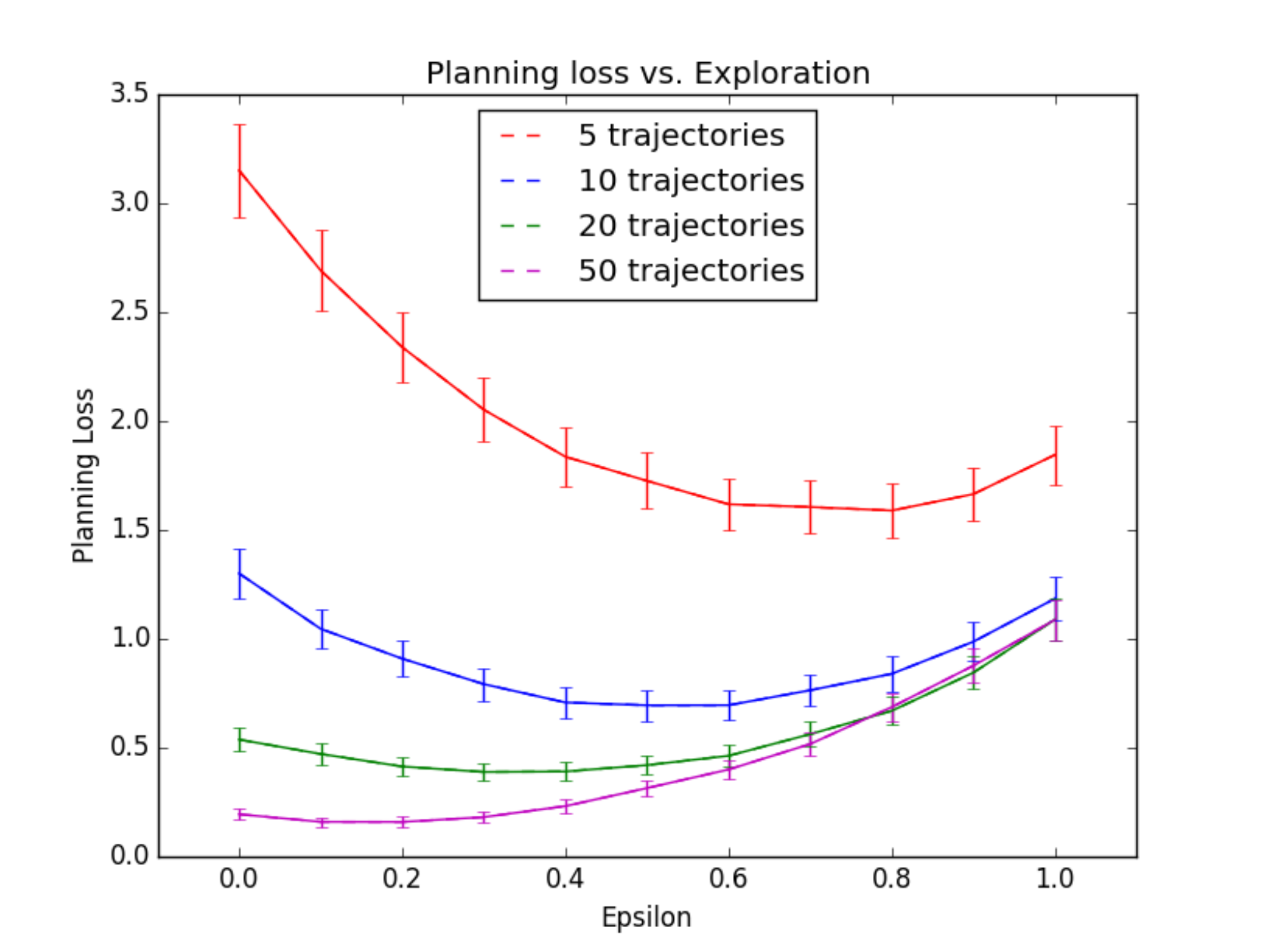}
\caption{Increasing the randomness in action selection during planning combats planner overfitting in random MDPs.}
\label{f:epsilon}
\end{figure}

We see that manipulating either $\check{\gamma}$ or $\epsilon$ can be used to modulate the impact of planner overfitting. Which method to use in practice depends on the particular planner being used and how easily it is modified to use these methods.
% increase noise in action selection (epsilon greedy) decrease complexity of explicit policy representation used in policy search reduce the number of iterations in policy search policy search on learned model~\cite{ng03}                                    

\section{Decreased Policy Complexity}
\label{s:search}

In addition to indirectly controlling policy complexity via $\epsilon$ and $\gamma$, it is possible to control for the complexity via the representation of the policy itself. In this section, we look at varying the complexity of the policy in the context of model-based RL in which a model is learned and then a policy for that model is optimized via a policy search approach. Such an approach was used in the setting of helicopter control~\cite{ng03} in the sense that collected data in that work was used to build a model and a policy was constructed to optimize performance in this model (via policy search, in this case) and then deployed in the environment.

Our test domain was Lunar Lander, an environment with a continuous state space and discrete actions. The goal of the environment is to control a falling spacecraft so as to land gently in a target area. It consists of 8 state variables, namely the lander's $x$ and $y$ coordinates, $x$ and $y$ velocities, angle and angular velocities, and two Boolean flags corresponding to whether each leg has touched down. The agent can take 4 actions, corresponding to which of its three thrusters (or no thruster) is active during the current time step. The Lunar Lander environment is publicly available as part of the OpenAI Gym Toolkit~\cite{openaigym}.

We collected $40$k $200$-step episodes of data on Lunar Lander. During data collection, decisions were made by a policy-gradient algorithm. Specifically, we ran the REINFORCE algorithm with the state--value function as the baseline~\cite{Williams92,sutton00}. For the policy and value networks, we used a single hidden layer neural network with 16 hidden units and relu activation functions. We used the Adam algorithm~\cite{kingma14} with the default parameters and a step size of $0.005$. The learned model was a 3-layer neural net with ReLU activation functions mapping the agent's state (8 inputs corresponding to 8 state variables) as well as a one-hot representation of actions (4 inputs corresponding to 4 possible actions). The model consisted of two fully connected hidden layers with 32 units each and ReLU activations. We again used Adam and used step size $0.001$ to learn the model.
% Using this database of trajectories, we trained a transition and reward model to match the environment. The model had the form of a neural net with three hidden layers, and it predicts the next state and reward given the current state and action.           

We then ran policy-gradient RL (REINFORCE) as a planner using the learned model. The policy was represented by a neural network with a single hidden layer. To control the complexity of the policy representation, we varied the number of units in the hidden layer from $1$ to $2000$.  Results were averaged over $40$ runs. Figure~\ref{f:policygradient} shows that increasing the size of the hidden layer in the policy resulted in better and better performance on the learned model (top line). However, after 250 or so units, the resulting policy performed less well on the actual environment (bottom line). Thus, we see that reducing policy complexity serves as yet another way to reduce planner overfitting.

% Note that networks with larger hidden sizes tend to do better asymptotically.                            

\begin{figure}
\centering
\includegraphics[width=3.5in]{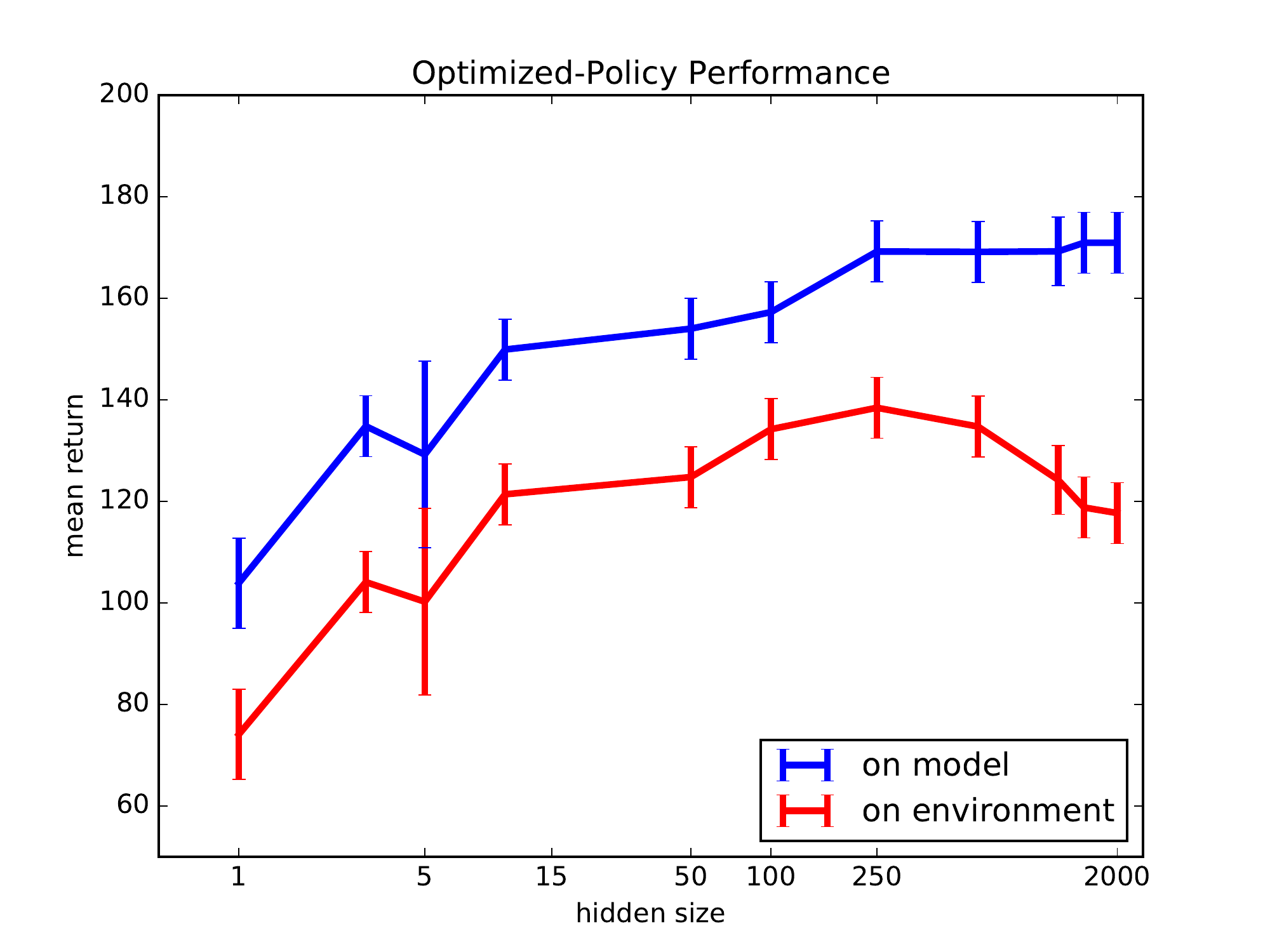}
\caption{Decreasing the number of hidden units used to represent a policy combats planner overfitting in the Lunar Lander domain.}
\label{f:policygradient}
\end{figure}

\section{Related Work}
\label{s:related}

%\citet{Jiang2015a}: I imagine we'll treat this one differently -- I suspect we'll have a blurb in the intro?

Prior work has explored the use of regularization in reinforcement learning to mitigate overfitting. We survey some of the previous methods according to which function is regularized: (1) value, (2) model, or (3) policy.

\subsection{Regularizing Value Functions}

Many prior approaches have applied regularization to value function approximation, including Least Squares Temporal Difference learning~\cite{bradtke1996linear}, Policy Evaluation, and the batch approach of Fitted $Q$-Iteration (FQI)~\cite{ernst2005tree}.

% Q values                                           

\citet{Kolter2009a} applied regularization techniques to LSTD~\cite{bradtke1996linear} with an algorithm they called LARS-TD. In particular, they argued that, without regularization, LSTD's performance depends heavily on the number of basis functions chosen and the size of the data set collected. If the data set is too small, the technique is prone to overfitting. They showed that $L_1$ and $L_2$ regularization yield a procedure that inherits the benefits of selecting good features while making it possible to compute the fixed point. Later work by \citet{liu2012regularized} built on this work with the algorithm RO-TD, an $L_1$ regularized off policy Temporal Difference Learning method. \citet{Johns2010} cast the $L_1$ regularized fixed-point computation as a linear complementarity problem, which provides stronger solution-uniqueness guarantees than those provided for LARS-TD.  \citet{petrik2010feature} examined the approximate linear programming (ALP) framework for finding approximated value functions in large MDPs. They showed the benefits of adding an $L_1$ regularization constraint to the ALP that increases the error bound at training time and helps fight overfitting.

\citet{Farahmand2008} and \citet{Farahmand2009a} focused on regularization applied to Policy Iteration and Fitted $Q$-Iteration (FQI)~\cite{ernst2005tree} and developed two related methods for Regularized Policy Iteration, each leveraging $L_2$ regularization during the evaluation of policies for each iteration. The first method adds a regularization term to the Least Squares Temporal Difference (LSTD) error~\cite{bradtke1996linear}, while the second adds a similar term to the optimization of Bellman residual minimization~\cite{baird1995residual,schweitzer1985generalized,williams1993tight} with regularization~\cite{loth2007sparse}. Their main result shows finite convergence for the $Q$ function under the approximated policy and the true optimal policy. A method for FQI adds a regularization cost to the least squares regression of the $Q$ function. Follow up work~\cite{Farahmand2008a} expanded Regularized Fitted $Q$-Iteration to planning. That is, given a data set $\mc{D} = \langle (s_1, a_1, r_1, s_1'), \ldots, (s_m, a_m, r_m, s_m') \rangle$ and a function family $\mc{F}$ (like regression trees), FQI approximates a $Q$ function through repeated iterations of the following regression problem:
\begin{eqnarray*}
\lefteqn{\hat{Q}_{t+1}}\\
&=& \argmin_{Q \in F} \sum_{i=1}^m \left[r_i + \gamma \max_{a' \in A} \hat{Q}_t(s_i,a) - Q(s_i', a_i)\right]^2 \\ & & + \lambda \text{Pen}(\hat{Q}),
\end{eqnarray*}
where $\lambda \text{Pen}(\hat{Q})$ imposes a regularization penalty term and $\lambda$ is a regularization coefficient. They prove bounds relating this regularization cost to the approximation error in $\hat{Q}$ between iterations of FQI.

%\citet{Gabel2006} investigate value function approximation using a neural net. A key insight is for the network to monitor its training to avoid overfitting.                                         

\citet{fahramand2011ms} and \citet{farahmand2011regularization} focused on a problem relevant to our approach---regularization for $Q$ value selection in RL and planning. They considered an offline setting in which an algorithm, given a data set of experiences and set of possible $Q$ functions, must choose a $Q$ function from the set that minimizes the true Bellman error.
% Their work advanced two core theoretical results that underlie their proposed algorithm, \textsc{BErMin}.
They provided a general complexity regularization bound for model selection, which they applied to bound the approximation error for the $Q$ function chosen by their proposed algorithm, \textsc{BErMin}.

\subsection{Regularizing Models}

In model-based RL, regularizaion can be used to improve estimates of $R$ and $T$ when data is finite or limited.

\citet{Taylor2009a} investigated the relationship between Kernelized LSTD~\citet{xu2005kernel} and other related techniques, with a focus on regularization in model-based RL. Most relevant to our work is their decomposition of the Bellman error into transition and reward error, which they empirically show offers insight into the choice of regularization parameters.

\citet{Bartlett2009a} developed an algorithm, \textsc{Regal}, with optimal regret for weakly communicating MDPs. \textsc{Regal} heavily relies on regularization; based on all prior experience, the algorithm continually updates a set $\mc{M}$ that, with high probability, contains the true MDP. Letting $\lambda^*(M)$ denote the optimal per-step reward of the MDP $M$, the traditional optimistic exploration tactic would suggest that the agent should choose the $M'$ in $\mc{M}$ with maximal $\lambda^*(M')$. \textsc{Regal} {\it also} includes a regularization term to this maximization to prevent overfitting based on the experiences so far, resulting in state-of-the-art regret bounds. 
% The regularization term involves minimizing the {\it span} Specifically, it chooses the $M'$ that maximizes:                                           
%\begin{equation}                                    
%\lambda^*(M') - C \cdot \text{sp}(h^*(M'))
%\end{equation}
%Where $C$ is some constant denoting the regularization parameters, $h^*$ is a bias vector, and $\text{sp}(\cdot)$ denotes the span of $h$, given as:
%\begin{equation}                                    
%\text{sp}(h) \triangleq \max_{s \in S} h(s) - \min_{s \in S} h(s)
%\end{equation}                                      

\subsection{Regularizing Policies}

The focus of applying regularization to policies is to limit the complexity of the policy class being searched in the planning process. It is this approach that we adopt in the present paper.

\citet{Somani2013} explored how regularization can help online planning for Partially Observable Markov Decision Processes (POMDPs). They introduced the \textsc{Despot} algorithm (Determinized Sparse Partially Observable Tree), which constructs a tree that models the execution of all policies on a number of sampled scenarios (rollouts). However, the authors note that \textsc{Despot} typically succumbs to overfitting, as a policy that performs well on the sampled scenarios is not likely to perform well in general. The work proposes a regularized extension of \textsc{Despot}, \textsc{R-Despot}, where regularization takes the form of balancing between the performance of the policy on the samples with the complexity of the policy class. Specifically, \textsc{R-Despot} imposes a regularization penalty on the utility of each node in the belief tree. The algorithm then computes the policy that maximizes regularized utility for the tree using a bottom up dynamic programming procedure on the tree. The approach is similar to ours in that it also limits policy complexity through regularization, but focuses on regularizing utility instead of regularizing the use of a transition model. Investigating the interplay between these two approaches poses an interesting direction for future work.  In a similar vein, \citet{Thomas2015} developed a batch RL algorithm with a probabilistic performance guarantee that limits the complexity of the policy class as a means of regularization.

\citet{Petrik2008a} conducted analysis similar to \citet{jiang15}. Specifically, they investigated the situations in which using a lower-than-actual discount factor can improve solution quality given an approximate model, noting that this procedure has the effect of regularizing rewards. The work also advanced the first bounds on the error of using a smaller discount factor.

% zzz why did we bother?                             
%\citet{Whiteson2011} investigate evaluation overfitting due to lack of generality in experimentation.       
% Silver2010a

\section{Conclusion}
\label{s:conclusions}

For three different regularization methods---decreased discounting, increased exploration, and decreased policy complexity, we found a consistent U-shaped tradeoff between the size of the policy class being searched and its performance on a learned model. Future work will evaluate other methods such as drop out and early stopping.

The plots that varied $\epsilon$ and $\gamma$ were quite similar, raising the possibility that perhaps epsilon-greedy action selection is functioning as another way to decrease the effective horizon depth using in planning---chaining together random actions makes future states less predictable and therefore carry less weight. Later work can examine whether jointly choosing $\epsilon$ and $\gamma$ is more
effective than setting only one at a time.
% If so, that suggests that                          

More work is needed to identify methods that can learn in much larger domains~\cite{bellemare13}. One concept worth considering is adapting regularization non-uniformly to the state space. That is, it should be possible to modulate the complexity of policies considered in parts of the state space where the model is more accurate, allowing more expressive plans is some places than others.

% For AAMAS-2017, as references are unlimited but appendices must fit within 8 pages, the References section must come after the appendices (if any)      

% The following two commands are all you need in the initial runs of your .tex file to produce the bibliography for the citations in your paper.
% \bibliographystyle{plainnat}                       

\newpage
% \bibliographystyle{abbrv}
% \bibliography{mlittman,reg}
\bibliography{reg}
% You must have a proper ".bib" file and remember to run:
% latex bibtex latex latex                        
% to resolve all references                          

% ACM needs 'a single self-contained file'!
%\balancecolumns % GM June 2007                      % That's all folks!
% \bibliography{example_paper}
\bibliographystyle{icml2018}

\end{document}